\DeclareMathOperator*{\argmax}{argmax}
\newtheorem{theorem}{Theorem}
\newcommand{\comment}[1]{}
\title{\LARGE \bf
Bellman Gradient Iteration for Inverse Reinforcement Learning
}
\author{Kun Li$^{1}$, Yanan Sui$^{1}$, Joel W. Burdick$^{1}$
\thanks{*This work was
supported by the National Institutes of Health, NIBIB.} \thanks{$^{1}$Kun Li, Yanan Sui
and Joel W. Burdick are with Department of Mechanical and Civil Engineering, California Institute of
Technology, Pasadena, CA 91125, USA {\tt\small kunli@caltech.edu}
}%
}
\begin{document}

\maketitle
\thispagestyle{empty}
\pagestyle{empty}

%%% -------------------------------------------------------------------------------------------- %%%
\begin{abstract}
  This paper develops an inverse reinforcement learning algorithm aimed at recovering a reward
  function from the observed actions of an agent. We introduce a strategy to flexibly handle
  different types of actions with two approximations of the Bellman Optimality Equation, and a
  Bellman Gradient Iteration method to compute the gradient of the Q-value with respect to the
  reward function.  These methods allow us to build a differentiable relation between the Q-value
  and the reward function and learn an approximately optimal reward function with gradient methods.
  We test the proposed method in two simulated environments by evaluating the accuracy of different
  approximations and comparing the proposed method with existing solutions. The results show that
  even with a linear reward function, the proposed method has a comparable accuracy with the
  state-of-the-art method adopting a non-linear reward function, and the proposed method is more
  flexible because it is defined on observed actions instead of trajectories 
\end{abstract}

%%% -------------------------------------------------------------------------------------------- %%%
\section{introduction}
\label{irl::intro}
In many problems, the actions of an agent in an environment can be modeled as a Markov Decision
Process, where the environment decides the states and transitional probabilities, and the agent
decides its own reward function based on the preferences over the states and takes actions
accordingly. Since the agent's reward function determines its actions, it is possible to estimate
the state preferences from the observed actions, hence the inverse reinforcement learning problem.

This problem arises in many applications. For example, in robot learning by demonstration
\cite{irl:rlfd}, an operator may manipulate an object based on knowledge and preference of the
object, like which object states are achievable and which object states are desired. By learning the
knowledge and preference from the observed operator motion, a robot can manipulate the object in an
appropriate way. Another application is analyzing a person's physical wellness from daily
observation of motions. Assuming the person's actions are based on self-evaluation of physical
limitations, the change of such limitations (a potential sign of health problems) can be reflected
by long-term monitoring of the subject's motion and estimated via inverse reinforcement learning
(IRL).

To solve the problem, it is critical to model the relation between the agent's actions and the
reward functions. Since an action depends on both the immediate reward and future rewards, existing
solutions model either a relation between the actions and the value function \cite{irl::maxentropy,
irl::irl2}, or a relation between the actions and the Q-function \cite{irl::bayirl,
irl::irl1,irl::maxmargin, irl::subgradient}. To efficiently compute the optimal reward function, the
gradient of the optimal value function and the optimal Q-function with respect to the reward
function parameter is necessary, but the optimal value function and optimal Q function are
non-differentiable with respect to the rewards, and existing solutions adopt different
approximations to alleviate the problem.

This paper introduces two approximations of the Bellman Optimality Equation to make the optimal
value function and the optimal Q-function differentiable with respect to the reward function, and
proposes a Bellman Gradient Iteration method to compute the gradients efficiently. The approximation
level can be adjusted with a parameter to adapt to different types of action preferences, like
preferring an action leading to an optimal future path, or an action leading to uncertain future
paths. To the best of our knowledge, no previous work computes the gradients by modeling the
relation between motion and reward in a differentiable way.

The paper is organized as follows. We review existing work on inverse reinforcement learning in
Section \ref{irl::related}, and formulate the gradient-based method in Section \ref{irl::irl}. We
introduce Bellman Gradient Iteration method to compute the gradients in Section \ref{irl::gradient}.
Several experiments are shown in Section \ref{irl::experiments}, with conclusions in Section
\ref{irl::conclusions}.

%%% -------------------------------------------------------------------------------------------- %%%
\section{Related Works}
\label{irl::related}
% Policy based IRL.
The Inverse Reinforcement Learning problem is first formulated in \cite{irl::irl1}, where the agent
observes the states resulting from an assumingly optimal policy, and tries to learn a reward
function that makes the policy better than all alternatives. Since the goal can be achieved by
multiple reward functions, this paper tries to find one that maximizes the difference between the
observed policy and the second best policy. This idea is extended by \cite{irl::maxmargin}, in the
name of max-margin learning for inverse optimal control. Another extension is proposed in
\cite{irl::irl2}, where the goal is not to recover the actual reward function, but to find a reward
function that leads to a policy equivalent to the observed one, measured by the total reward
collected by following that policy.

% Behavior based IRL
Since a motion policy may be difficult to estimate from observations, a behavior-based method is
proposed in \cite{irl::maxentropy}, which models the distribution of behaviors as a maximum-entropy
model on the amount of reward collected from each behavior. This model has many applications and
extensions. For example, Nguyen et al. \cite{irl::sequence} consider a sequence of changing reward
functions instead of a single reward function. Levine et al. \cite{irl::gaussianirl} and Finn et al.
\cite{irl::guidedirl} consider complex reward functions, instead of linear ones, and use Gaussian
process and neural networks, respectively, to model the reward function. Choi et al.
\cite{irl::pomdp} consider partially observed environments, and combines partially observed Markov
Decision Process with reward learning. Levine et al.  \cite{irl::localirl} model the behaviors
based on the local optimality of a behavior, instead of the summation of rewards.  Wulfmeier et al.
\cite{irl::deepirl} use a multi-layer neural network to represent nonlinear reward functions.

Another method is proposed in \cite{irl::bayirl}, which models the probability of a behavior as the
product of each state-action's probability, and learns the reward function via maximum a posteriori
estimation. However, due to the complex relation between the reward function and the behavior
distribution, the author uses computationally expensive Monte-Carlo methods to sample the
distribution. This work is extended by \cite{irl::subgradient}, which uses sub-gradient methods to
reduce the computations.  Another extensions is shown in \cite{irl::bayioc}, which tries to find a
reward function that matches the observed behavior. For motions involving multiple tasks and varying
reward functions, methods are developed in \cite{irl::multirl1} and \cite{irl::multirl2}, which try
to learn multiple reward functions. 

Our method uses gradient methods like \cite{irl::subgradient}, but we introduce two approximation
methods that improve the flexibility of motion modeling, and a Bellman Gradient Iteration algorithm
that computes the gradient of the optimal value function and the optimal Q-function with respect to
the reward function accurately and efficiently. 

%%%----------------------------------------------------------------------------------------------%%%
\section{Inverse Reinforcement Learning}
\label{irl::irl}
\subsection{Markov Decision Process}
A Markov Decision Process is described with the following variables:
\begin{itemize}
  \item $S=\{s\}$, a set of states
  \item $A=\{a\}$, a set of actions
  \item $P_{ss'}^a$, a state transition function that defines the probability that state $s$ becomes
    $s'$ after action $a$.
  \item $R=\{r(s)\}$, a reward function that defines the immediate reward of state $s$.
  \item $\gamma$, a discount factor that ensures the convergence of the MDP over an infinite
    horizon.
\end{itemize}

A motion can be represented as a sequence of state-action pairs:
\[\zeta=\{(s_i,a_i)|i=0,\cdots,N_\zeta\}\]
where $N_\zeta$ denotes the length of the motion.

One key problem is how to choose the action in each state, or the policy, $\pi(s)\in A$, a mapping
from states to actions. This problem can be handled by reinforcement learning algorithms, by
introducing the value function $V(s)$ and the Q-function $Q(s,a)$, described by the Bellman Equation
\cite{irl::rl}:
\begin{align}
  &V^\pi(s)=\sum_{s'|s,\pi(s)}P_{ss'}^{\pi(s)}[r(s')+\gamma*V^\pi(s')],\\
  &Q^\pi(s,a)=\sum_{s'|s,a}P_{ss'}^a[r(s')+\gamma*V^\pi(s')]
\end{align}
where $V^\pi$ and $Q^\pi$ define the value function and the Q-function under a policy $\pi$.

For an optimal policy $\pi^*$, the value function and the Q-function should be maximized on every
state. This is described by the Bellman Optimality Equation \cite{irl::rl}:
\begin{align}
  &V^*(s)=\max_{a\in A}\sum_{s'|s,a}P_{ss'}^a[r(s')+\gamma*V^*(s')],\\
  &Q^*(s,a)=\sum_{s'|s,a}P_{ss'}^a[r(s')+\gamma*\max_{a'\in A}Q^*(s',a')].
  \label{equation:bellmanequation}
\end{align}

With the optimal value function, $V^*(s)$, and Q-function, $Q^*(s,a)$, the action $a$ for state $s$
can be chosen in multiple ways. For example, the agent may choose $a$ in a stochastic way:
\[P(a|s)\propto Q^*(s,a), a \in A\]
where the agent's probability to choose action $a$ in state $s$ is proportional to the optimal Q
value $Q^*(s,a)$.

\subsection{Motion Modeling}
Assuming the reward function $r$ is parameterized by $\theta$, we model $P(\zeta|\theta)$ based on
the optimal Q-value of each state-action pair of $\zeta=\{(s_i,a_i)|i=0,\cdots,N_\zeta\}$:
\begin{equation}
  P(\zeta|\theta)=\prod_{(s,a)\in \zeta} P((s,a)|\theta)
  \label{equation:problem}
\end{equation}
where \begin{equation}P((s,a)|\theta)=\frac{\exp{b*Q^*(s,a)}}{\sum_{\hat{a}\in
A}\exp{b*Q^*(s,\hat{a})}}\label{equation:motionmodel}\end{equation} defines the probability to
choose action $a$ in state $s$ based on the formulation in \cite{irl::bayirl}, and $b$
is a parameter controlling the degree of confidence in the agent's ability to choose actions based on
Q values. In the remaining sections, we use $Q(s,a)$ to denote the optimal Q-value of the
state-action pair $(s,a)$. Since $Q(s,a)$ depends on reward function $r$, it also depends on
$\theta$.

In this formulation, the inverse reinforcement learning problem is equivalent to maximum-likelihood
estimation of
$\theta$:
\begin{equation}
  \label{equation:maxtheta}
  \theta=\argmax_{\theta}\log{P(\zeta|\theta)}
\end{equation}
where the log-likelihood of $P(\zeta|\theta)$ is given by:
\begin{equation}
  L(\theta)=\sum_{(s,a)\in\zeta}(b*Q(s,a)-\log{\sum_{\hat{a}\in
  A}\exp{b*Q(s,\hat{a}))}}
  \label{equation:loglikelihood}
\end{equation}
and the gradient of the log-likelihood is given by:
\begin{align}
  \nabla L(\theta)&=\sum_{(s,a)\in\zeta}(b*\nabla Q(s,a)\nonumber\\
  &-b*\sum_{\hat{a}\in
  A}P((s,\hat{a})|r(\theta))\nabla Q(s,\hat{a})).
  \label{equation:loglikelihoodgradient}
\end{align}

If we can compute the gradient of the Q-function $\nabla Q=\frac{\partial Q}{\partial
\theta}=\frac{\partial Q}{\partial r}\cdot \frac{\partial r}{\partial \theta}$, we can use gradient
methods to find a locally optimal parameter value:
\begin{equation}
  \label{equation:gradientascent}
  \theta=\theta+\alpha*\nabla L(\theta)
\end{equation}
where $\alpha$ is the learning rate. When the reward function is linear, the cost function is convex
and the global optimum can be achieved.  The standard way to compute the optimal Q-value is with the
following Bellman Equation of Optimality \cite{irl::rl} with Equation
\eqref{equation:bellmanequation}.

However, the Q-value in Equation \eqref{equation:bellmanequation} is non-differentiable with respect
to $r$ or $\theta$ due to the max operator. Its gradient $\nabla Q(s,a)$ cannot be computed in a
conventional way, and the sub-gradient method in \cite{irl::subgradient} cannot compute the
gradients everywhere in the parameter space. We propose a method called Bellman Gradient Iteration
to solve the problem.

\section{Bellman Gradient Iteration}
\label{irl::gradient}
To handle the non-differentiable max function in Equation \eqref{equation:bellmanequation}, we
introduce two approximation methods.

\subsection{Approximation with a P-Norm Function}
The first approximation is based on a p-norm:
\begin{equation}
  \label{equation:pnorm}
  \max(a_0,\cdots,a_n)\approx (\sum_{i=0}^n a_i^k)^{\frac{1}{k}}
\end{equation}
where $k$ controls the level of approximation, and we assume all the values $a_0,\cdots,a_n$ are
positive. When $k=\infty$, the approximation becomes exact. In the remaining section, we refer to
this method as p-norm approximation.

Under this approximation, the Q-function in Equation \eqref{equation:bellmanequation} can be rewritten
as:
\begin{equation}
  Q_p(s,a)=\sum_{s'|s,a}P_{ss'}^a[r(s')+\gamma*(\sum_{a'\in A}Q_p^k(s',a'))^{1/k}].
  \label{equation:qpnorm}
\end{equation}

From Equation \eqref{equation:qpnorm}, we construct an approximately optimal value function with
p-norm approximation:
\begin{equation}
  V_p(s)=(\sum_{a\in A}Q_p^k(s,a))^{1/k}.
  \label{equation:vpnorm}
\end{equation}

Using Equations \eqref{equation:qpnorm} and \eqref{equation:vpnorm}, we build an approximate
Bellman Optimality Equation to find the approximately optimal value function and Q-function:
\begin{align}
  &Q_p(s,a)=\sum_{s'|s,a}P_{ss'}^a[r(s')+\gamma*V_p(s')]\label{equation:qbellpnorm},\\
  &V_p(s)=(\sum_{a\in A}(
  \sum_{s'|s,a}P_{ss'}^{a}[r(s')+\gamma*V_p(s')]))^k)^{1/k}\label{equation:vbellpnorm}.
\end{align}

Taking derivative on both sides of Equation \eqref{equation:vpnorm} and Equation
\eqref{equation:qbellpnorm}, we construct a Bellman Gradient Equation to compute the gradients
of $V_p(s)$ and $Q_p(s,a)$ with respect to reward function parameter $\theta$:
\begin{align}
  &\frac{\partial V_p(s)}{\partial \theta}= \frac{1}{k}(\sum_{a\in A}Q_p^k(s,a))^{\frac{1-k}{k}}\sum_{a\in
  A}k*Q_p^{k-1}(s,a)\frac{\partial Q_p(s,a)}{\partial \theta}\label{equation:pnormvgrad},\\
  &\frac{\partial Q_p(s,a)}{\partial \theta}=\sum_{s'|s,a}P_{ss'}^a(\frac{\partial
  r(s')}{\partial \theta}+\gamma*\frac{\partial V_p(s')}{\partial
  \theta})\label{equation:pnormqgrad}.
\end{align}

For a p-norm approximation with non-negative Q-values, the gap between the approximate value function
and the optimal value function is a function of $k$:
\[g_p(k)=(\sum_{a'\in A} Q_p(s',a')^k)^{\frac{1}{k}} -\max_{a'\in A}Q_p(s',a').\]
The gap function $g_p(k)$ describes the error of the approximation, and it has two properties.
\begin{theorem}
  Assuming all Q-values are non-negative, $Q_p(s,a)\geq 0, \forall s,a$, the tight lower bound of
  $g_p(k)$ is zero: \[\inf_{\forall k\in R} g_p(k)=0.\]
\end{theorem}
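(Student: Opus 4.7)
The plan is to reduce the statement to the classical fact that the $\ell^k$ norm of a finite non-negative vector converges to its maximum as $k\to\infty$, and then combine this with the standard inequality that the $\ell^k$ norm dominates the $\ell^\infty$ norm (when $k\ge 1$) to squeeze $g_p(k)$ between $0$ and a vanishing quantity.

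First I would fix the state $s'$ and abbreviate $a_i = Q_p(s',a_i)$ for $a_i\in A$, with $M=\max_i a_i\ge 0$ and $n=|A|$. The first step is the lower bound: since every $a_i\ge 0$, we have $\sum_i a_i^k \ge M^k$ for $k>0$, so $(\sum_i a_i^k)^{1/k}\ge M$. This immediately yields $g_p(k)\ge 0$ for all admissible $k$, which pins down $0$ as a lower bound for the infimum.

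Next I would establish that this lower bound is tight by exhibiting a sequence along which $g_p(k)\to 0$. The easy upper bound $\sum_i a_i^k \le n\,M^k$ gives
\begin{equation*}
M \;\le\; \Bigl(\sum_{i} a_i^k\Bigr)^{1/k} \;\le\; n^{1/k}\,M,
\end{equation*}
so $0\le g_p(k)\le (n^{1/k}-1)M$. Since $n^{1/k}\to 1$ as $k\to\infty$, the right-hand side vanishes, hence $\lim_{k\to\infty} g_p(k)=0$. Combining this with the non-negativity from the previous paragraph gives $\inf_k g_p(k)=0$, as claimed. In the degenerate case $M=0$ the bound is trivial, and if $M>0$ the squeeze is uniform on any bounded set of Q-values.

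The only subtle point — and the one I would flag as the main obstacle — is the quantifier ``$\forall k\in\mathbb{R}$'' in the statement. The p-norm approximation in Equation~\eqref{equation:pnorm} only behaves as an approximation of $\max$ for sufficiently large positive $k$; for $k\le 0$ the expression $(\sum_i a_i^k)^{1/k}$ is either undefined (when some $a_i=0$) or is actually a generalized mean that lies \emph{below} $M$, which would make $g_p(k)$ negative and contradict the stated lower bound. I would therefore interpret the universal quantifier as ranging over the regime $k\ge 1$ in which the approximation is well-defined and the above inequalities apply, and note this restriction explicitly; within that regime the proof above shows both $g_p(k)\ge 0$ and $g_p(k)\to 0$, so the infimum equals zero and is approached (but not attained) in the limit $k\to\infty$.
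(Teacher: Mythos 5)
Your proposal is correct and follows the same core strategy as the paper's proof: the lower bound $g_p(k)\ge 0$ comes from observing that the sum $\sum_{a'}Q_p(s',a')^k$ dominates the single term $Q_p(s',a_{max})^k$ (the paper drops the non-maximal terms explicitly; you phrase it as $\sum_i a_i^k\ge M^k$, which is the same inequality), and tightness comes from the $k\to\infty$ limit. Where you improve on the paper: your squeeze $M\le(\sum_i a_i^k)^{1/k}\le n^{1/k}M$ gives an explicit vanishing rate $(n^{1/k}-1)M$, whereas the paper simply substitutes $k=\infty$ and asserts the $p$-norm equals the max there, which is an informal evaluation rather than a limit argument. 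Your caveat about the quantifier $\forall k\in\mathbb{R}$ is also well taken and identifies a real gap in the paper's statement: the monotonicity step $(x+y)^{1/k}\ge y^{1/k}$ used for the lower bound requires $k>0$ (so that $t\mapsto t^{1/k}$ is increasing), and for $k\le 0$ the expression is either undefined or a generalized mean lying below the maximum, so the theorem should be read as ranging over $k>0$ (or $k\ge 1$); the paper's proof silently assumes this.
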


\begin{proof}
    $\forall k \in R$, assuming $a_{max}=\argmax_{a'\in A}Q_p(s',a'),$
    \begin{align}
      g_p(k)&=(\sum_{a'\in A} Q_p(s',a')^k)^{\frac{1}{k}}-\max_{a'\in A}Q_p(s',a')\nonumber\\ 
      &=(\sum_{a'\in A/a_{max}}Q_p(s',a')^k+Q_p(s',a_{max})^k)^{\frac{1}{k}}-\max_{a'\in
      A}Q_p(s',a')\nonumber.
    \end{align}
  Since $Q_p(s,a)\geq 0\Rightarrow\sum_{a'\in A/a_{max}}Q_p(s',a')^k\geq 0$, 
    \begin{align}
      g_p(k)&\geq(Q_p(s',a_{max})^k)^{\frac{1}{k}}-\max_{a'\in
      A}Q_p(s',a')\nonumber\\
      &=Q_p(s',a_{max})-\max_{a'\in A}Q_p(s',a')=0\nonumber
    \end{align}
   When $k=\infty$:
    \begin{align}
      g_p(k)&=(\sum_{a'\in A} Q_p(s',a')^\infty)^{\frac{1}{\infty}} -\max_{a'\in A}Q_p(s',a')\nonumber\\
      &=\max_{a'\in A}Q_p(s',a')-\max_{a'\in A}Q_p(s',a')=0\nonumber
    \end{align}
\end{proof}
\begin{theorem}
Assuming all Q-values are non-negative, $Q_p(s,a)\geq 0, \forall s,a$, $g_p(k)$ is a decreasing
  function with respect to $k$: \[g'_p(k)\leq0, \forall k \in R.\]
\end{theorem}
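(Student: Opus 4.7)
The plan is to observe that $\max_{a'\in A} Q_p(s',a')$ does not depend on $k$, so it suffices to show that
\[
f(k) := \Bigl(\sum_{a'\in A} Q_p(s',a')^k\Bigr)^{1/k}
\]
is non-increasing in $k$. I would proceed by logarithmic differentiation. Writing $\log f(k) = \frac{1}{k}\log S(k)$ with $S(k) := \sum_{a'\in A} Q_p(s',a')^k$, the quotient rule gives
\[
\frac{d}{dk}\log f(k) \;=\; \frac{1}{k^2}\Bigl[\,k\,\tfrac{S'(k)}{S(k)} - \log S(k)\,\Bigr],
\]
and the task reduces to showing that the bracket is non-positive.

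Next I would introduce the normalized weights $p_{a'} := Q_p(s',a')^k / S(k)$, which form a probability distribution on $A$. Since $S'(k) = \sum_{a'} Q_p(s',a')^k \log Q_p(s',a')$, we have $k S'(k)/S(k) = \sum_{a'} p_{a'}\log Q_p(s',a')^k$, and the identity $\log Q_p(s',a')^k = \log p_{a'} + \log S(k)$ collapses the bracket to $\sum_{a'} p_{a'}\log p_{a'} = -H(p)\le 0$, where $H(p)$ denotes Shannon entropy. Therefore $\frac{d}{dk}\log f(k)\le 0$, so $f$ and hence $g_p$ are non-increasing, yielding $g'_p(k)\le 0$.

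The main obstacle I anticipate is the bookkeeping around degenerate cases. When some $Q_p(s',a')$ vanish the logarithm is ill-defined, so one needs the convention $0\log 0 := 0$ (justified by $\lim_{x\to 0^+}x\log x = 0$) for the calculation to go through; if \emph{all} $Q_p(s',a')$ vanish then $f\equiv 0$ and the claim is trivial, and if all are positive and equal the weights $p_{a'}$ are uniform, $-H(p) = -\log|A| < 0$, and the inequality is actually strict. An alternative, shorter argument would invoke the classical monotonicity of the $\ell^k$-norm on $\mathbb{R}^{|A|}$ in $k$, but spelling out the entropy calculation keeps the proof self-contained and mirrors the clean structure of Theorem 1.
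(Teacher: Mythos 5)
Your proof is correct and is essentially the paper's own argument in different notation: both differentiate the p-norm term (you via $\log f$, the paper via the product rule on $f$ directly), reduce to the same bracket, and close with the same fact, since your $\sum_{a'} p_{a'}\log p_{a'}\le 0$ is exactly the paper's termwise inequality $k\log Q_p(s',a')\le \log\bigl(\sum_{a'}Q_p(s',a')^k\bigr)$ after dividing by $S(k)$. The only substantive difference is that you explicitly handle the degenerate case $Q_p(s',a')=0$ with the $0\log 0=0$ convention, a point the paper glosses over.
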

\begin{proof}
    \begin{align}
      g'_p(k)&=\frac{1}{k}*(\sum_{a'\in A} Q_p(s',a')^k)^{\frac{1-k}{k}}*(\sum_{a'\in A}
      Q_p(s',a')^k\log(Q_p(s',a')))\nonumber\\
      &+(\sum_{a'\in A} Q_p(s',a')^k)^{\frac{1}{k}}\log(\sum_{a'\in A}
      Q_p(s',a')^k)\frac{1}{-k^2}\nonumber\\
      &=\frac{(\sum_{a'\in A} Q_p(s',a')^k)^{\frac{1}{k}}}{ k^2\sum_{a'\in A}Q_p(s',a')^k}(\sum_{a'\in
      A}Q_p(s',a')^kk\log(Q_p(s',a'))\nonumber\\
      &- \sum_{a'\in A}Q_p(s',a')^k \log(\sum_{a'\in A}Q_p(s',a')^k))\nonumber.
    \end{align}
  Since $k\log(Q_p(s',a'))\leq\log(\sum_{a'\in A}Q_p(s',a')^k)$:
  \[g_p'(k)\leq0.\]
\end{proof}

\subsection{Approximation with Generalized Soft-Maximum Function}
The second approximation is based on a generalized soft-maximum function:
\begin{equation}
  \label{equation:softmax}
  \max(a_0,\cdots,a_n)\approx\frac{\log(\sum_{i=0}^n\exp(ka_i))}{k}
\end{equation}
where $k$ controls the level of approximation. When $k=\infty$, the approximation becomes exact. In
the remaining sections, we refer to this method as g-soft approximation. 

Under this approximation, the Q-function in Equation \eqref{equation:bellmanequation} can be rewritten
as:
\begin{equation}
  Q_g(s,a)=\sum_{s'|s,a}P_{ss'}^a[r(s')+\gamma*\frac{\log{\sum_{a'\in A}\exp (kQ_g(s',a'))}}{k}].
  \label{equation:qgsoft}
\end{equation}

From Equation \eqref{equation:qgsoft}, we construct an approximately optimal value function with
g-soft approximation:
\begin{equation}
  V_g(s)=\frac{\log{\sum_{a\in A}\exp (kQ_g(s,a))}}{k}.
  \label{equation:vgsoft}
\end{equation}

With Equations \eqref{equation:qgsoft} and \eqref{equation:vgsoft}, we build an approximate
Bellman Optimality Equation to find the approximately optimal value function and Q-function:
\begin{align}
  &Q_g(s,a)=\sum_{s'|s,a}P_{ss'}^a[r(s')+\gamma*V_g(s')]\label{equation:qbellgsoft},\\
  &V_g(s)=\frac{\log{\sum_{a\in A}\exp
  (k(\sum_{s'|s,a}P_{ss'}^{a}[r(s')+\gamma*V_g(s'))}}{k})\label{equation:vbellgsoft}.
\end{align}

Taking derivative on both sides of Equations \eqref{equation:vgsoft} and
\eqref{equation:qbellgsoft}, we construct a Bellman Gradient Equation to compute the gradients of
$V_g(s)$ and $Q_g(s,a)$ with respect to the reward function parameter $\theta$:

\begin{align}
  &\frac{\partial V_g(s)}{\partial \theta}=\sum_{a\in A}\frac{\exp (kQ_g(s,a))}{\sum_{a'\in A}\exp
  (kQ_g(s,a'))} \frac{\partial Q_g(s,a)}{\partial
  \theta}\label{equation:gsoftvgrad},\\
  &\frac{\partial Q_g(s,a)}{\partial
  \theta}=\sum_{s'|s,a}P_{ss'}^a(\frac{\partial r(s')}{\partial \theta}+\gamma*\frac{\partial V_g(s')}{\partial
  \theta})\label{equation:gsoftqgrad}.
\end{align}

For a g-soft approximation, the gap between the approximate value function and the optimal value
function is:
\[g_g(k)=\frac{\log(\sum_{a'\in A}\exp(kQ_g(s',a')))}{k}-\max_{a'\in A}Q_g(s',a').\]
The gap has the following two properties.

\begin{theorem}
  The tight lower bound of $g_g(k)$ is zero: 
  \[\inf_{\forall k\in R} g_g(k)=0.\]
\end{theorem}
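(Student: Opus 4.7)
The plan is to mirror the structure of the proof of Theorem~1: first produce a uniform lower bound $g_g(k)\ge 0$, then exhibit a sequence of $k$-values along which $g_g(k)\to 0$, which together yield $\inf g_g(k)=0$.

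For the lower bound, I would factor out the dominant exponential. Let $a_{max}=\argmax_{a'\in A} Q_g(s',a')$ and $M=Q_g(s',a_{max})$. Writing $\exp(kQ_g(s',a'))=\exp(kM)\exp(k(Q_g(s',a')-M))$ and pulling $\exp(kM)$ out of the logarithm reduces $g_g(k)$ to
\begin{equation*}
 g_g(k)=\tfrac{1}{k}\log\Bigl(\textstyle\sum_{a'\in A}\exp\bigl(k(Q_g(s',a')-M)\bigr)\Bigr).
\end{equation*}
The $a'=a_{max}$ summand equals $1$, so (for $k>0$) the sum is at least $1$, the logarithm is non-negative, and dividing by the positive $k$ yields $g_g(k)\ge 0$. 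This is the analogue of the $\sum_{a'\in A/a_{max}}Q_p^k\ge 0$ step in Theorem~1.

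For tightness, I would take $k\to\infty$. In the reduced form above, every non-maximizing term satisfies $Q_g(s',a')-M<0$, so $\exp(k(Q_g(s',a')-M))\to 0$, while each maximizing term contributes $1$. Hence the sum converges to the number $m\ge 1$ of maximizers, giving $g_g(k)=\tfrac{\log m}{k}+o(1/k)\to 0$. Combined with Step~1 this shows the infimum equals zero and is approached (in the limit) along $k\to\infty$, matching the $k=\infty$ evaluation used in Theorem~1.

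The only real obstacle is the quantifier ``$\forall k\in R$'' in the statement: the soft-max identity that underlies the approximation is only a valid upper-bounding surrogate for the hard max when $k>0$, and the algebraic step dividing by $k$ flips for $k<0$. I would handle this exactly as Theorem~1 implicitly did, by restricting attention to $k>0$ (the regime in which the g-soft approximation is introduced in Equation~\eqref{equation:softmax}); the lower bound then holds throughout this regime and tightness follows from the $k\to\infty$ computation above. No bound on $Q_g$ (positivity, etc.) is required, which is why the hypothesis present in Theorem~1 is absent here.
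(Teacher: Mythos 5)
Your proof is correct and follows essentially the same route as the paper's: isolate the maximizing action to get the lower bound $g_g(k)\ge 0$ (the paper bounds $\sum_{a'}\exp(kQ_g)\ge\exp(kQ_g(s',a_{max}))$ directly rather than factoring out $\exp(kM)$, which is algebraically the same step), and then send $k\to\infty$ to show the bound is tight. Your version is in fact slightly more careful than the paper's --- you make the limit quantitative via $g_g(k)=\tfrac{\log m}{k}+o(1/k)$ and you explicitly flag the $k>0$ restriction needed to divide by $k$, which the paper leaves implicit --- but these are refinements of, not departures from, the published argument.
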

\begin{proof}
    $\forall k\in R$: assuming $a_{max}=\argmax_{a'\in A}Q_g(s',a'), $
    \begin{align}
      g_g(k)&=\frac{\log(\sum_{a'\in A}\exp(kQ_g(s',a')))}{k}-\max_{a'\in A}Q_g(s',a')\nonumber\\
      &=\frac{\log(\sum_{a'\in A/a_{max}}\exp(kQ_g(s',a'))+\exp(kQ_g(s',a_{max})))}{k}\nonumber\\&
      -\max_{a'\in A}Q_g(s',a')\nonumber\\
      &>Q_g(s',a_{max})-\max_{a'\in A}Q_g(s',a')=0\nonumber
    \end{align}
    When $k=\infty$,
    \begin{align}
      &\lim_{k\to\infty}(\frac{\log(\sum_{a'\in A}\exp(kQ_g(s',a')))}{k}-\max_{a'\in
      A}Q_g(s',a'))\nonumber\\
      &=\lim_{k\to\infty}(\frac{\log(\sum_{a'\in A}\exp(kQ_g(s',a')))}{k})-\max_{a'\in
      A}Q_g(s',a')\nonumber\\
      &=\max_{a'\in A}Q_g(s',a')-\max_{a'\in A}Q_g(s',a')=0\nonumber
    \end{align}
\end{proof}

\begin{theorem}
 $g_g(k)$ is a decreasing function with respect to $k$:
  \[g'_g(k)<0, \forall k \in R.\]
\end{theorem}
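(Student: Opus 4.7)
The plan is to differentiate $g_g(k)$ directly and simplify the resulting expression into a form whose sign is manifest. Since $\max_{a'\in A}Q_g(s',a')$ does not depend on $k$, only the log-sum-exp term contributes to the derivative. Setting $S(k):=\sum_{a'\in A}\exp(kQ_g(s',a'))$, I would apply the quotient rule to $\log S(k)/k$ to obtain
\[
g'_g(k)=\frac{k\,S'(k)-S(k)\log S(k)}{k^2 S(k)}=\frac{\sum_{a'\in A}e^{kQ_g(s',a')}\bigl(kQ_g(s',a')-\log S(k)\bigr)}{k^2 S(k)}.
\]

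The next step is to recognize the softmax weights $p_{a'}:=e^{kQ_g(s',a')}/S(k)$, which form a probability distribution on $A$ with every entry strictly positive. Rewriting $kQ_g(s',a')-\log S(k)=\log p_{a'}$ collapses the numerator to $S(k)\sum_{a'}p_{a'}\log p_{a'}$, so
\[
g'_g(k)=\frac{1}{k^2}\sum_{a'\in A}p_{a'}\log p_{a'}=-\frac{H(p)}{k^2},
\]
where $H(p)$ denotes the Shannon entropy of the distribution $p$. Since $H(p)\geq 0$ with strict positivity whenever $|A|\geq 2$ (all $p_{a'}>0$), and $k^2>0$ for $k\neq 0$, one concludes $g'_g(k)<0$. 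This parallels the style of the earlier p-norm proof, where a direct differentiation is reduced to a comparison of $kQ_g$ against $\log S(k)$, except here the structural rearrangement produces an entropy outright.

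I expect no serious obstacle: the argument is a structured calculation, and the only subtle point is that the expression for $g_g(k)$ is not well-defined at $k=0$ (there both $\log S(k)$ and $k$ vanish, and a Taylor expansion shows $g_g(k)$ behaves like $\log|A|/k$ near the origin), so the claim should be interpreted for $k\in\mathbb{R}\setminus\{0\}$. The key insight making the calculation clean is that differentiating the log-sum-exp naturally produces softmax weights, after which the residual factor is exactly minus the entropy of those weights, giving a sign-definite answer without any case analysis on the magnitudes of the $Q_g(s',a')$.
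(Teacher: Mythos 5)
Your proposal is correct and is essentially the paper's own argument: both differentiate $g_g(k)$ directly and conclude negativity from the fact that $kQ_g(s',a')<\log\sum_{a''\in A}\exp(kQ_g(s',a''))$ for every $a'$, which is exactly your observation that each softmax weight satisfies $\log p_{a'}<0$; writing the result as $g'_g(k)=-H(p)/k^2$ is a clean repackaging rather than a different route. You do go slightly beyond the paper in making explicit two hypotheses it leaves implicit --- the claim only makes sense for $k\neq 0$ and strictness requires $|A|\geq 2$ --- which is a worthwhile clarification.
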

\begin{proof}
    \begin{align}
      g'_g(k)&=-\frac{\log(\sum_{a'\in A}\exp(kQ_g(s',a')))}{k^2}\nonumber\\
      &+\frac{\sum_{a'\in A}Q_g(s',a')\exp(kQ_g(s',a'))}{k\sum_{a'\in
      A}\exp(kQ_g(s',a'))}<0\nonumber
    \end{align}
    Since:
    \begin{align}
      &-\frac{\log(\sum_{a'\in A}\exp(kQ_g(s',a')))}{k^2}\nonumber\\
      &+\frac{\sum_{a'\in A}Q_g(s',a')\exp(kQ_g(s',a'))}{k\sum_{a'\in A}\exp(kQ_g(s',a'))}<0\nonumber\\
      &\Longleftrightarrow  \sum_{a'\in A}kQ_g(s',a')\exp(kQ_g(s',a'))<\nonumber\\
      &\sum_{a'\in A}\log(\sum_{a'\in A}\exp(kQ_g(s',a')))\exp(kQ_g(s',a'))\nonumber\\
      &\Longleftarrow kQ_g(s',a')<\log(\sum_{a'\in A}\exp(kQ_g(s',a')))\nonumber.
    \end{align}
\end{proof}

Based on the theorems, the gap between the approximated Q-value and the exact Q-value decreases with
larger $k$, thus the objective function in Equation \eqref{equation:loglikelihood} under
approximation will approach the true one with larger $k$.
\subsection{Bellman Gradient Iteration}
Based on the Bellman Equations \eqref{equation:qbellpnorm}, \eqref{equation:vbellpnorm},
\eqref{equation:qbellgsoft}, and \eqref{equation:vbellgsoft}, we can iteratively compute the value
of each state $V(s)$ and the value of each state-action pair $Q(s,a)$, as shown in Algorithm
\ref{alg:value}. In the algorithm, $apprxMax$ means a p-norm approximation of the max function for
the first method, and a g-soft approximation of the max function for the second method.

After computing the approximately optimal Q-function, with the Bellman Gradient Equation
\eqref{equation:pnormvgrad}, \eqref{equation:pnormqgrad}, \eqref{equation:gsoftvgrad}, and
\eqref{equation:gsoftqgrad}, we can iteratively compute the gradient of each state $\frac{\partial
V}{\partial \theta}$ and each state-action pair $\frac{\partial Q(s,a)}{\partial \theta}$ with
respect to the reward function parameter $\theta$, as shown in Algorithm \ref{alg:gradient}.  In the
algorithm, $\frac{\partial apprxMax}{\partial Q[s,a]}$ corresponds to the gradient of each
approximate value function with respect to the $Q$ function, as shown in Equation
\eqref{equation:pnormvgrad} and Equation \eqref{equation:gsoftvgrad}.

In these two approximations, the value of parameter $b$ depends on an agent's ability to choose
actions based on the Q values. Without application-specific information, we choose $b=1$ as an
uninformed parameter. Given a value for parameter $b$, the motion model of the agent is defined on
the approximated Q values, where the Q-value of a state-action pair depends on both the optimal path
following the state-action pair and other paths. When the approximation level $k$ is smaller, the
Q-value of a state-action pair relies less on the optimal path, and the motion model in Equation
\ref{equation:motionmodel} is similar to the model in \cite{irl::maxentropy}; When
$k\rightarrow\infty$, the Q-value approaches the standard Q-value, and the motion model is similar
to the model in \cite{irl::bayirl}.  By choosing different $k$ values, we can adapt the algorithm to
different types of motion models. 

With empirically chosen application-dependent parameters $k$ and $b$, Algorithm \ref{alg:value} and
Algorithm \ref{alg:gradient} are used compute the gradient of each Q-value, $Q[s,a]$, with respect
to the reward function parameter $\theta$, and learn the parameter with the gradient ascent method
shown in Equation \eqref{equation:loglikelihood} and Equation \eqref{equation:gradientascent}. With
the approximately optimal Q-function, the objective function is not convex, but a large $k$ will
make it close to a convex function, and a multi-start strategy handles local optimum. This process
is shown in Algorithm \ref{alg:irl}.
\begin{algorithm}[tb]
  \caption{Inverse Reinforcement Learning}
  \label{alg:irl}
\begin{algorithmic}[1]
  \STATE \textbf{Data}: {$S,A,P,\gamma$,k}
  \STATE \textbf{Result}: {Reward function}
  \STATE choose the number of random starts $n_{rs}$\;
  \FOR{$i \in range(n_{rs})$}
  	\STATE initialize $\theta$ randomly\;
  	\FOR{$e \in range(epochs)$}
   		\STATE compute reward function based on $\theta$\;
   		\STATE run approximate value iteration with Algorithm \ref{alg:value}\;
   		\STATE run Bellman Gradient Iteration with Algorithm \ref{alg:gradient}\;
   		\STATE compute gradient  $\nabla L(\theta)$ with Equation \eqref{equation:loglikelihoodgradient}\;
   		\STATE gradient ascent: $\theta=\theta+learning\_rate*\nabla L(\theta)$;
  	\ENDFOR
  	\STATE compute reward function based on $\theta$\;
  	\STATE compute the log-likelihood based on the reward function\;
  \ENDFOR
  \STATE identify the reward function with the highest log-likelihood\;
  \STATE return the reward function.
\end{algorithmic}
\end{algorithm}

\begin{algorithm}[tb]
  \caption{Approximate Value Iteration}
  \label{alg:value}
\begin{algorithmic}[1]

  \STATE Data: {$S,A,P,R,\gamma$,k}
  \STATE Result: {optimal value $V[S]$, optimal action value $Q[S,A]$}
  \STATE assign $V[S]$ arbitrarily
  \WHILE{$diff>threshold$}
	\STATE initialize $V'[S]=\{0\}$
    \FOR{$s\in S$}
      \STATE initialize $T[A]=\{0\}$
     \FOR{$a\in A$}
      \STATE $T[a]=\sum_{s'\in S}P_{ss'}^a(R[s']+\gamma*V[s'])$
      \ENDFOR
     \STATE $V'[s]=apprxMax(T[A],k)$
    \ENDFOR
    \STATE $diff=abs(V[S]-V'[S])$
    \STATE $V[S]=V'[S]$
  \ENDWHILE
  \STATE initialize $Q[S,A]=\{0\}$
  \FOR{$s\in S$}
    \FOR{$a\in A$}
      \STATE $Q[s,a]=Q[s,a]+\sum_{s'\in S}P_{ss'}^a(R[s']+\gamma*V[s'])$
      \ENDFOR
     \ENDFOR
\end{algorithmic}
\end{algorithm}

\begin{algorithm}[tb]
  \caption{Bellman Gradient Iteration}
  \label{alg:gradient}
\begin{algorithmic}[1]

  \STATE Data: {$S,A,P,R,V,Q,\gamma$,k}
  \STATE Result: {value gradient $V_G[S]$, Q-value gradient $Q_G[S,A]$}
  \STATE assign $V_G[S]$ arbitrarily\;
  \WHILE{$diff>threshold$}
    \STATE initialize $V'_G[S]=\{0\}$\;
    \FOR{$s\in S$}
      \STATE initialize $T_G[A]=\{0\}$\;
     \FOR{$a\in A$}
       \STATE $T_G[a]=\frac{\partial apprxMax}{\partial Q[s,a]}\sum_{s'\in S}P_{ss'}^a(\frac{\partial
       R[s']}{\partial \theta}+\gamma*V_G[s'])$\;
     \ENDFOR
     \STATE $V_G'[s]=\sum T_G[A]$\;
   \ENDFOR
    \STATE $diff=abs(V_G[S]-V'_G[S])$\;
    \STATE $V_G[S]=V'_G[S]$\;
  \ENDWHILE
  \STATE initialize $Q_G[S,A]=\{0\}$\;
  \FOR{$s\in S$}
    \FOR{$a\in A$}
      \STATE $Q_G[s,a]=Q_G[s,a]+\sum_{s'\in S}P_{ss'}^a(\frac{\partial R[s']}{\partial \theta}+\gamma*V_G[s'])$
    \ENDFOR
  \ENDFOR
\end{algorithmic}
\end{algorithm}

%%%----------------------------------------------------------------------------------------------%%%
\section{Experiments}
\label{irl::experiments}
We evaluate the proposed method in two simulated environments. 

The first example environment is a parking space behind a store, as shown in Figure
\ref{fig:environment}. A mobile robot tries to figure out the location of the exit by observing the
motions of multiple agents, like cars. Assuming that the true exit is in one corner of the space, we
can describe it with the gridworld mdp \cite{irl::irl1}.  In this $N\times N$ grid, the rewards for
all states equal to zero, except for the upper-right corner state, whose reward is one,
corresponding to the true exit, as shown in Figure \ref{fig:gridworld}.  Each agent starts from a
random state, and chooses in each step one of the following actions: up, down, left, and right.
Some trajectories are shown in Figure \ref{fig:environmenttrajectories}.  Each action has a 30\%
probability that a random action from the set of actions is actually taken. We use a linear
function to represent the reward, where the feature of a state is a length-$N^2$ vector indicating
the position of the grid represented by the state, e.g., the $i_{th}$ element of the feature vector
for the $i_{th}$ state equals to one and all other elements are zeros. 

\begin{figure}
  \centering
  \subfloat[A testing environment: in the encircled space, only one exit exists, but the mobile robot
  can only observe the space within the dashed lines, and it has to observe the motions of cars,
  shown as black dots in the figure, to estimate the location of the
  exit.]{\includegraphics[width=0.18\textwidth]{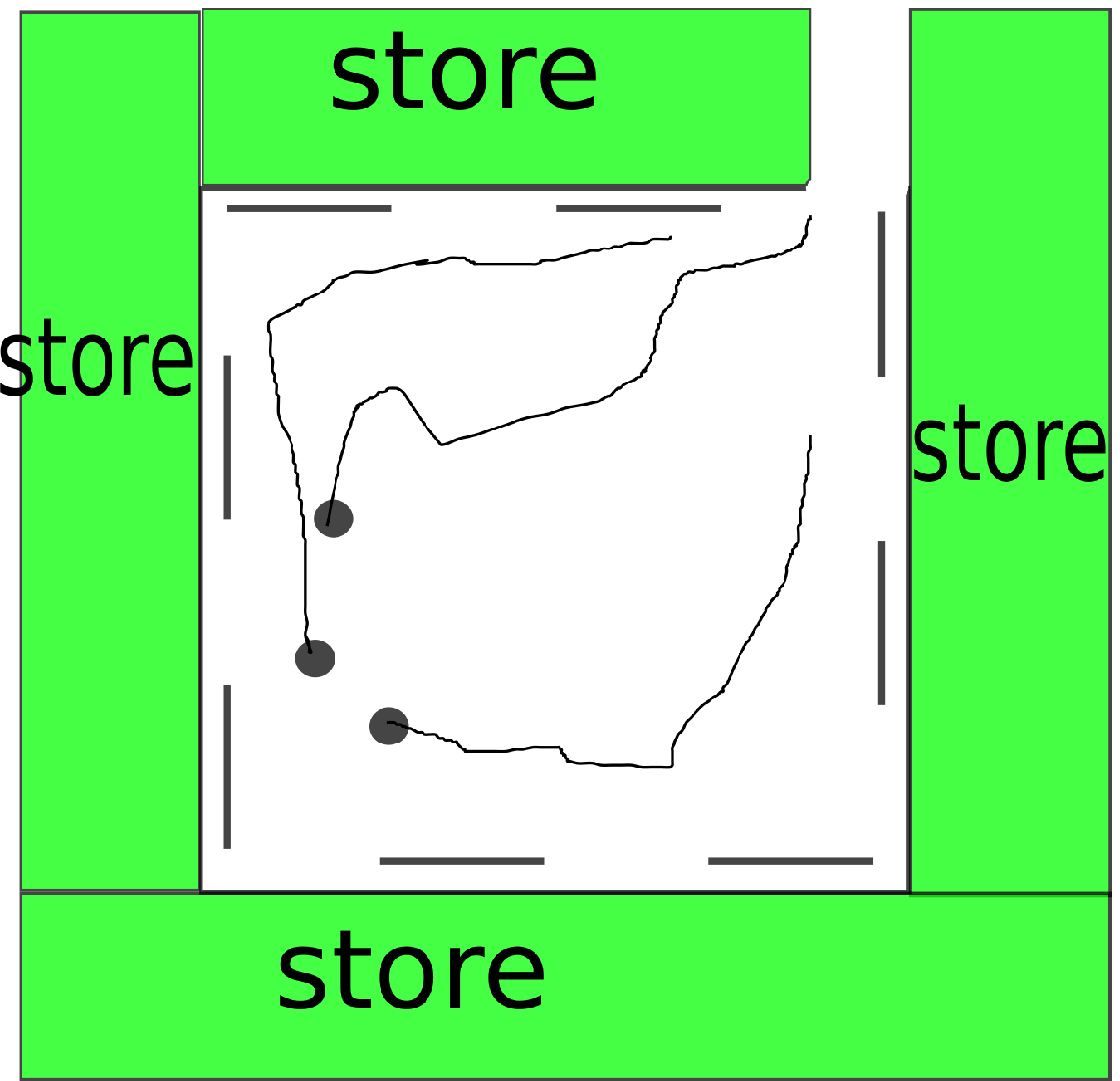}
  \label{fig:environment}}
  ~
  \subfloat[Example trajectories in Gridworld MDP: each agent starts from a random position, and
  follows an optimal policy to approach the exit. The black dots represent the initial positions of
  the agents. Each colored path denotes one trajectory with finite
  length.]{\includegraphics[width=0.22\textwidth]{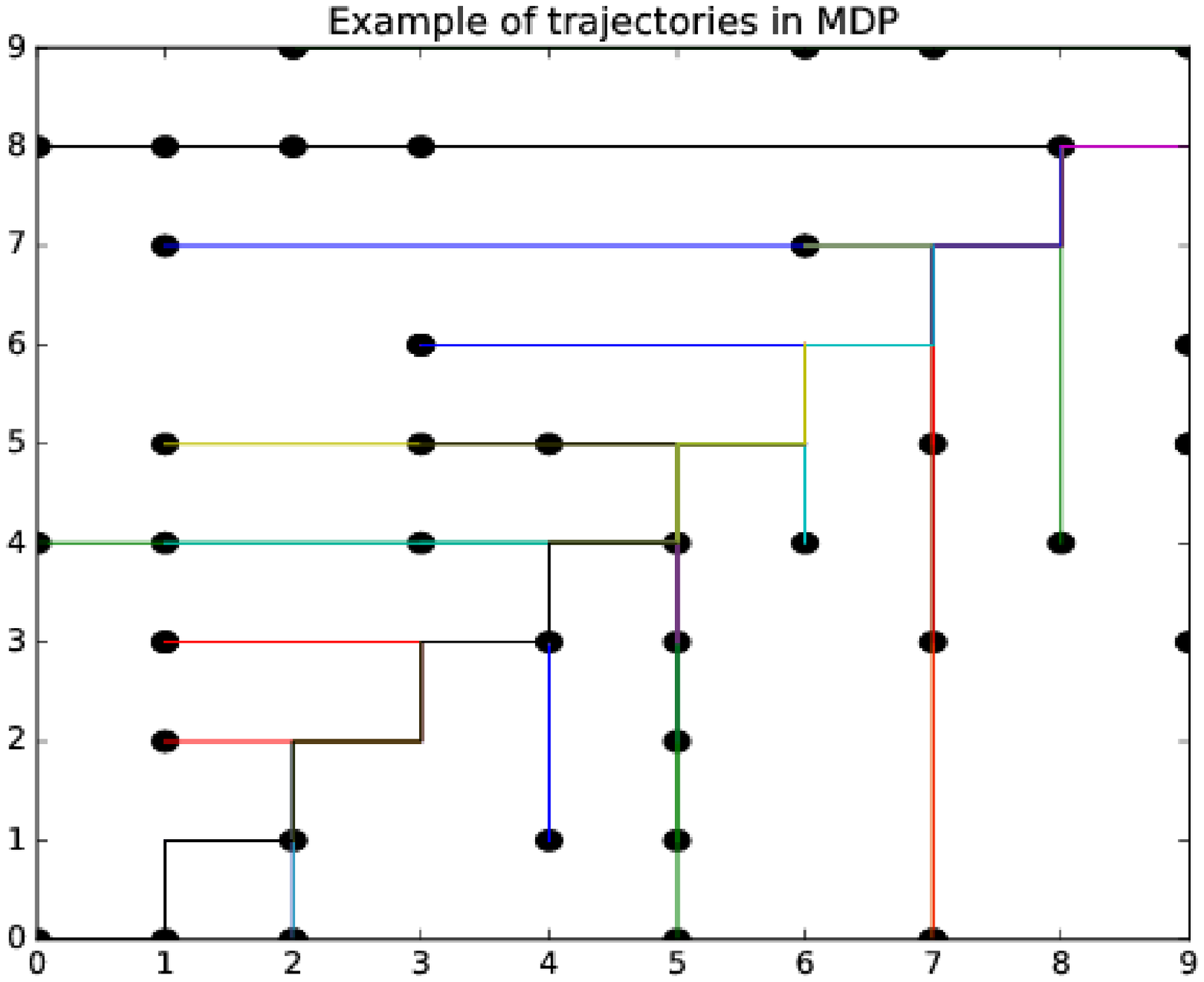}
  \label{fig:environmenttrajectories}}
\caption{A simulated environment}
\end{figure}

\begin{figure}
  \centering
  \subfloat[A reward table for the gridworld mdp on a $10\times 10$ grid.]{
    \includegraphics[width=0.2\textwidth]{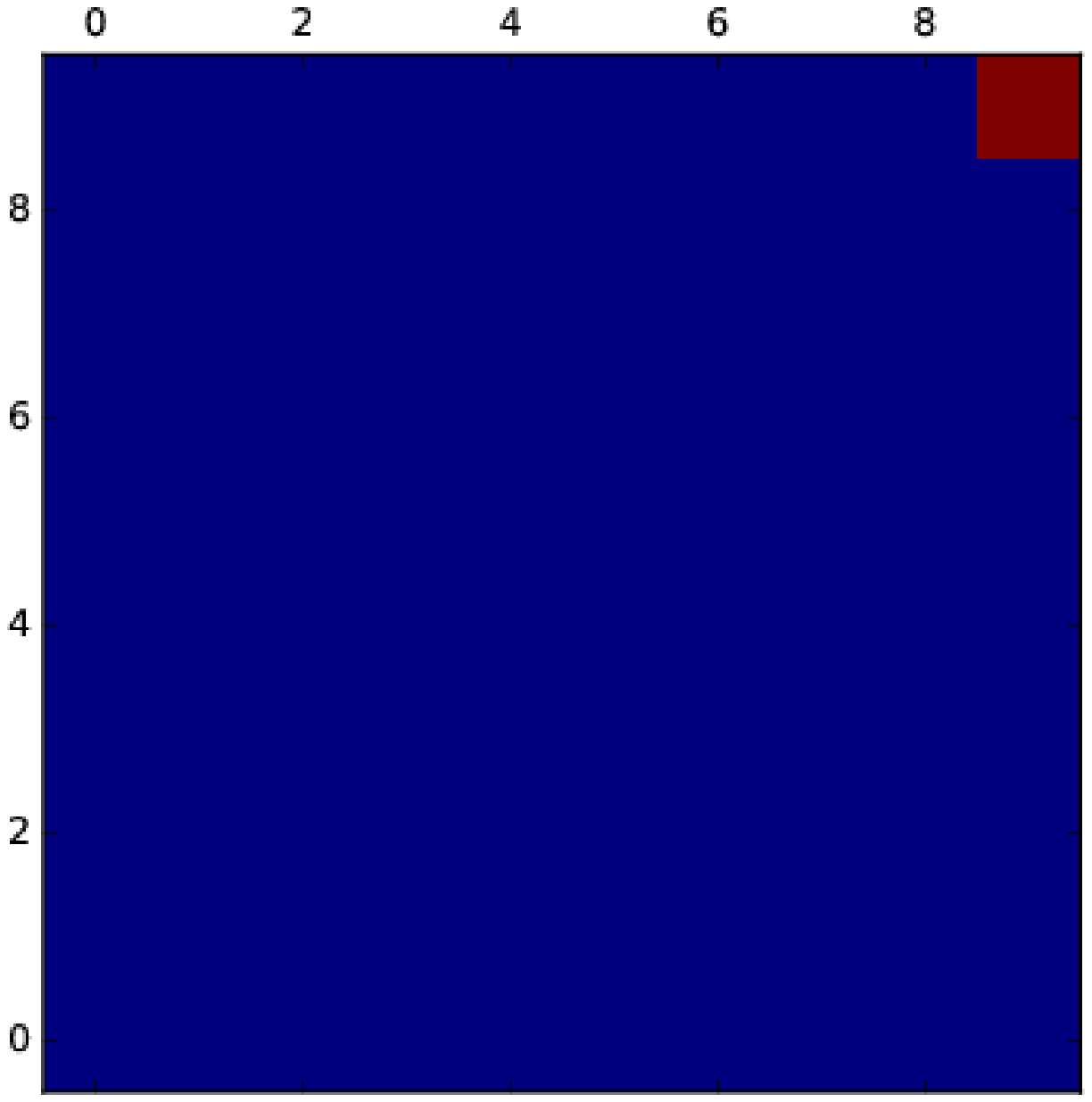}
    \label{fig:gridworld}
  }
  ~
  \subfloat[An example of a reward table for one objectworld mdp on a $10\times 10$ grid: it depends
  on randomly placed objects.]{
  \includegraphics[width=0.2\textwidth]{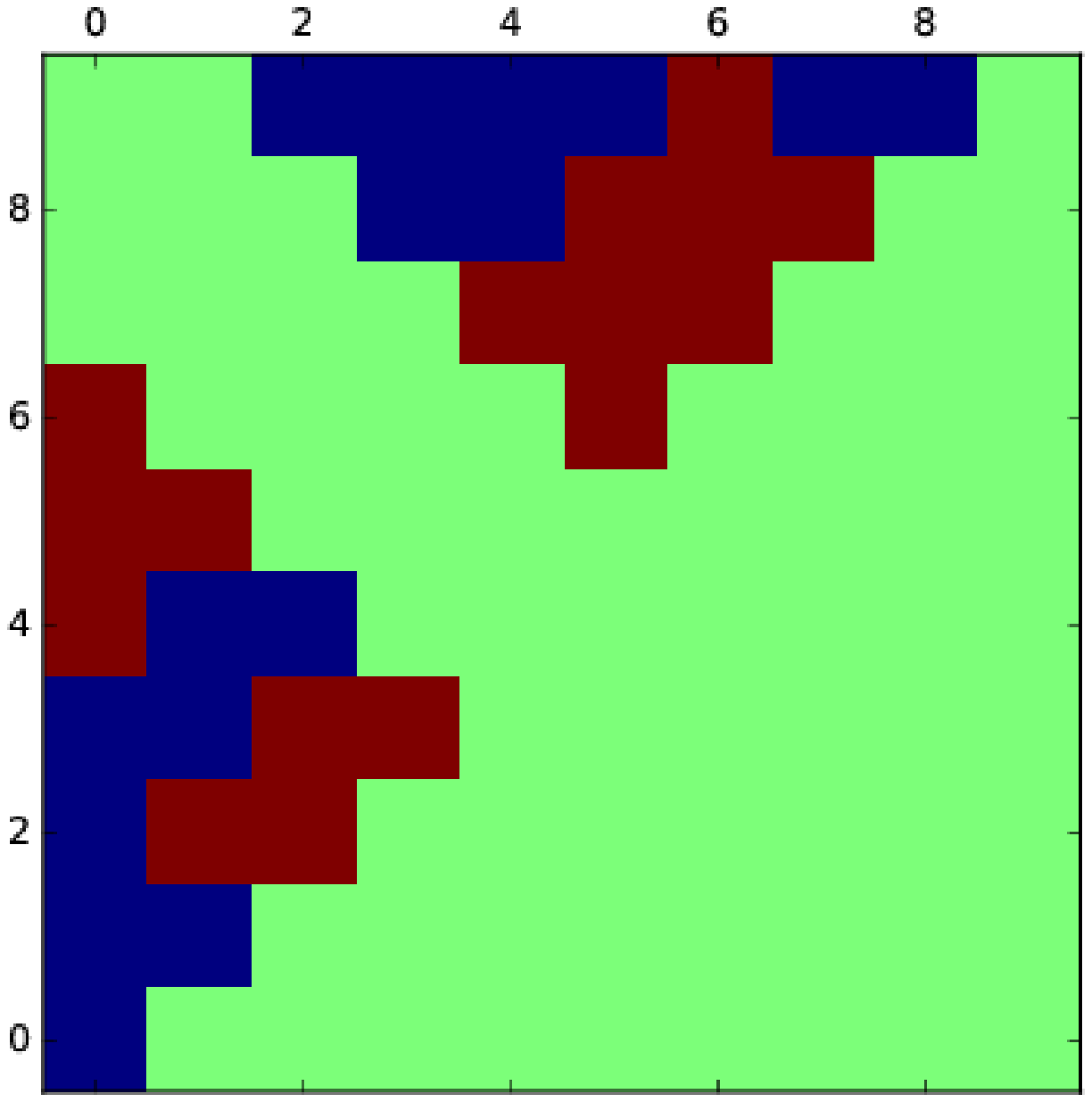}
  \label{fig:objectworld}
}
\caption{Examples of true reward tables}
\end{figure}

The second environment is an objectworld mdp \cite{irl::gaussianirl}. It is similar to the gridworld
mdp, but with a set of objects randomly placed on the grid. Each object has an inner color and an
outer color, selected from a set of possible colors, $C$. The reward of a state is positive if it is
within 3 cells of outer color $C1$ and 2 cells of outer color $C2$, negative if it is within 3 cells
of outer color $C1$, and zero otherwise. Other colors are irrelevant to the ground truth reward. One
example is shown in Figure \ref{fig:objectworld}. In this work, we place two random objects on the
grid, and use a linear function to represent the reward, where the feature of a state indicates its
discrete distance to each inter color and outer color in $C$. The true reward is nonlinear.

In each environment, the robot's trajectories are generated based on the true reward function.

\subsection{Qualitative Results}
We show some qualitative results with the proposed methods on 50 randomly generated trajectories,
where each trajectory has a random start and 10 steps.

For the p-norm approximation, we manually choose several parameter settings. In each of the
parameter settings, we run the algorithm fifty times with random reward parameter initializations,
and compare their log-likelihood values. For the parameter leading to the highest log-likelihood, we
compute the reward table. Several comparisons of ground-truth rewards and learned rewards are shown
in Figures \ref{fig:dem10} and \ref{fig:dem1}. For the g-soft approximation, we follow the same
procedure, and the results are shown in Figures \ref{fig:dem20} and \ref{fig:dem11}.

\begin{figure}
  \centering
  \subfloat[Demonstration of the p-norm method on a gridworld mdp: $k=100,b=1,e=1000,\alpha=0.01$.
  Left: ground truth. Right: recovered reward functin.]{
    \includegraphics[width=0.2\textwidth]{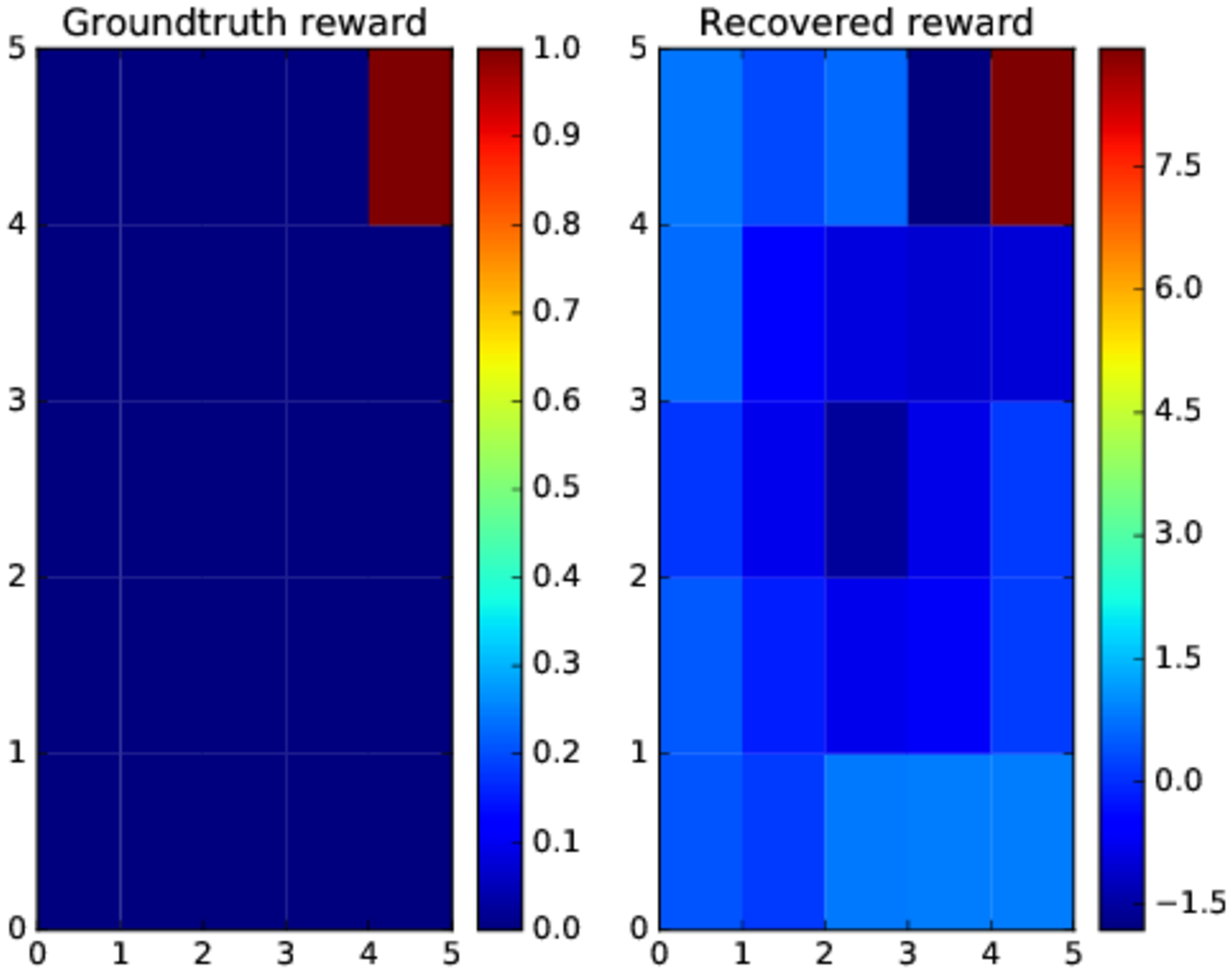}
    \label{fig:dem10}
  }
  ~
  \subfloat[Demonstration of the g-soft method on a gridworld mdp:
  $k=10,b=1,e=1000,\alpha=0.01$.Left: ground truth. Right: recovered rewrad function.]{
  \includegraphics[width=0.2\textwidth]{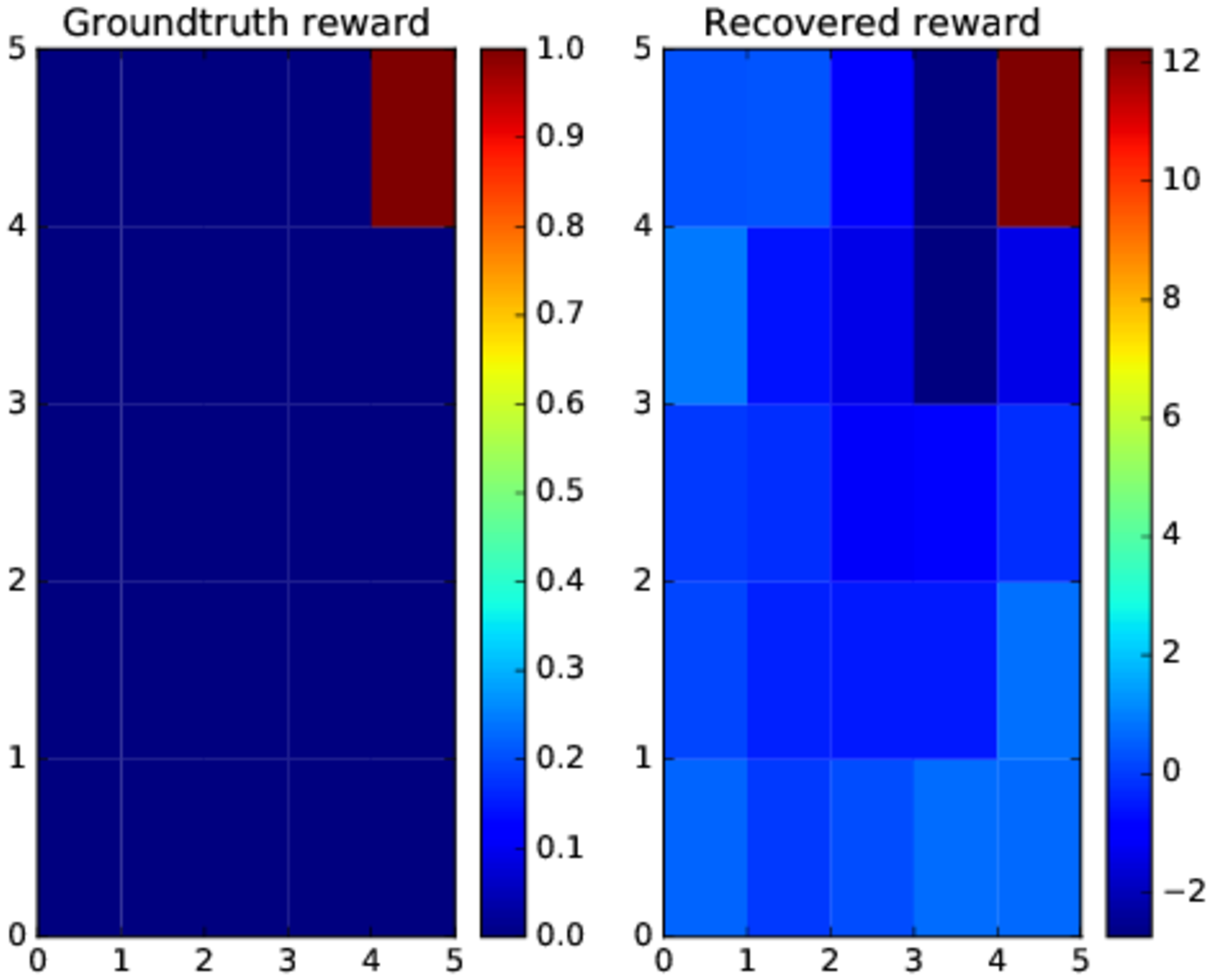}
  \label{fig:dem20}
}
\caption{Reward learning on gridworld mdp}
\end{figure}

\begin{figure}
  \centering
  \subfloat[Demonstration of the p-norm method on an objectworld mdp:
  $k=30,b=1,e=1000,\alpha=0.01$. Left: ground truth. Right: recovered reward function.]{
    \includegraphics[width=0.2\textwidth]{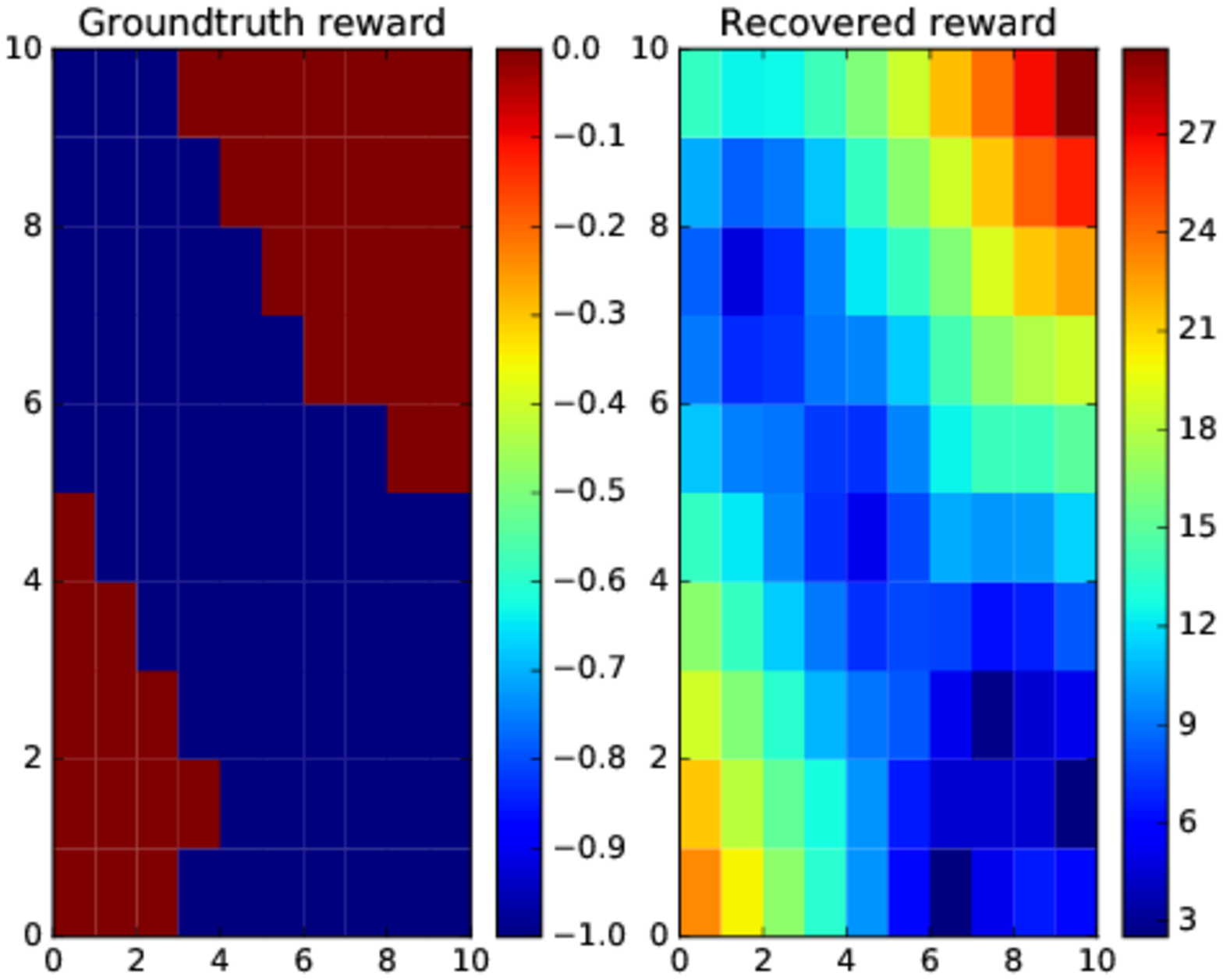}
    \label{fig:dem1}
  }
  ~
  \subfloat[Demonstration of the g-soft method on an objectworld mdp:
  $k=0.5,b=1,e=1000,\alpha=0.01$. Left: ground truth. Right: recovered reward function.]{
  \includegraphics[width=0.2\textwidth]{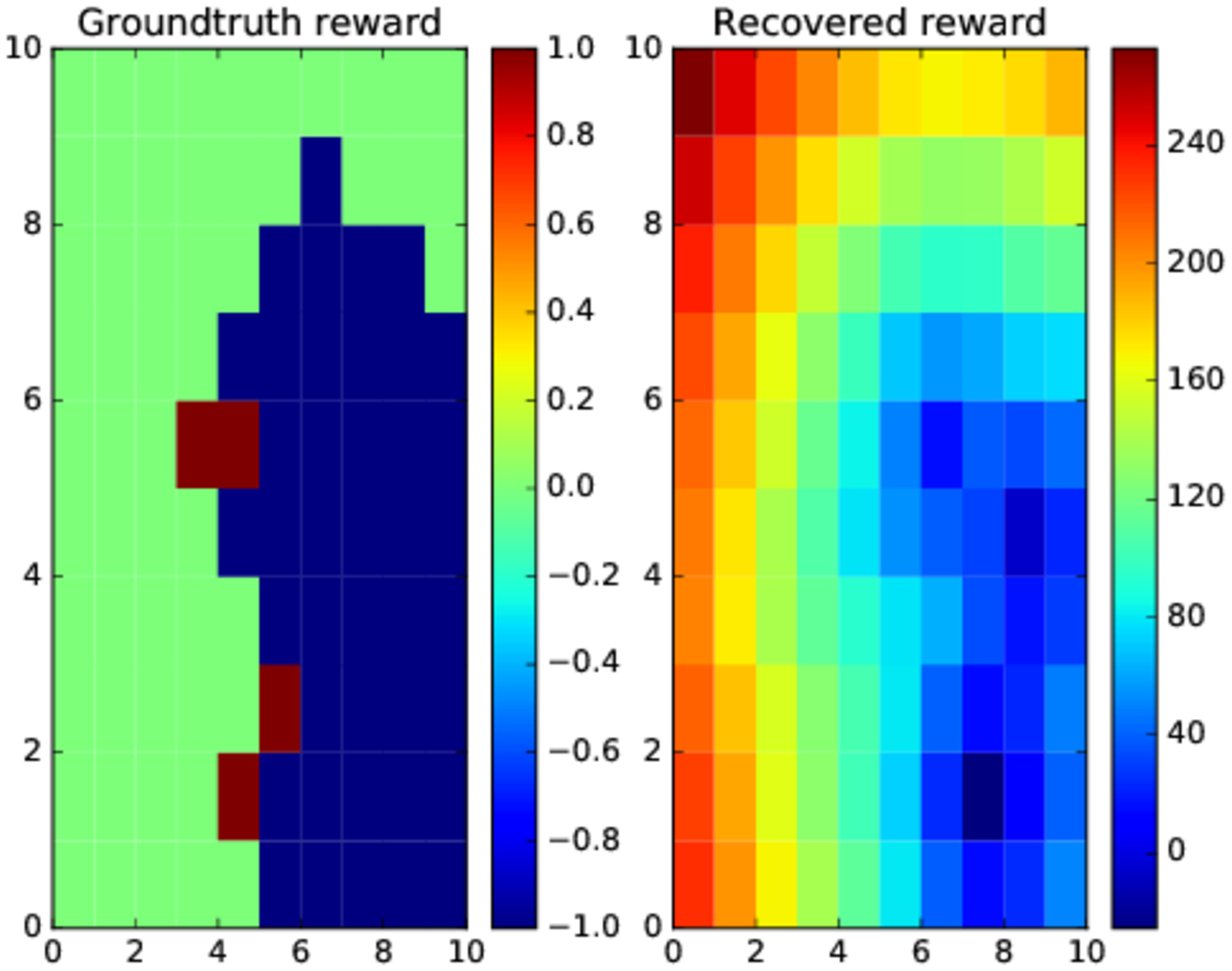}
  \label{fig:dem11}
}
\caption{Reward learning on objectworld mdp}
\end{figure}

\subsection{Quantitative Results}
\begin{figure}
  \centering
  \includegraphics[width=0.4\textwidth]{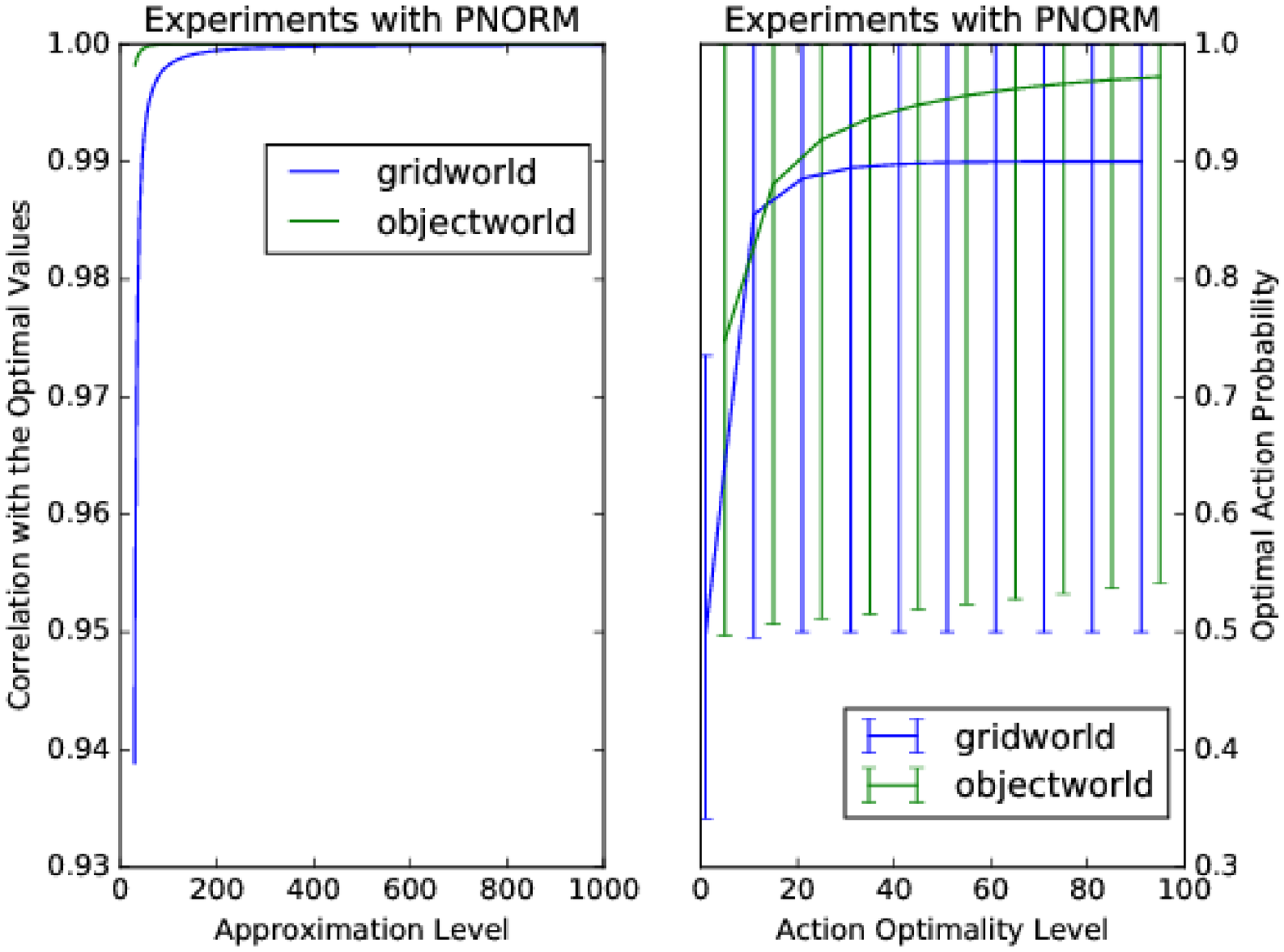}
  \caption{The effect of different approximation levels $k$ and confidence level $b$ on
  optimal value and optimal action selection with p-norm approximation: in both environments, when
  $k>500$, the approximated value function is nearly identical to the optimal value. For $b$, the
  probability to choose the optimal action keeps increasing in objectworld, but remains smaller than
  0.9 in grid world. }
  \label{fig:kbpnormresult}
\end{figure}

\begin{figure}
  \centering
  \includegraphics[width=0.4\textwidth]{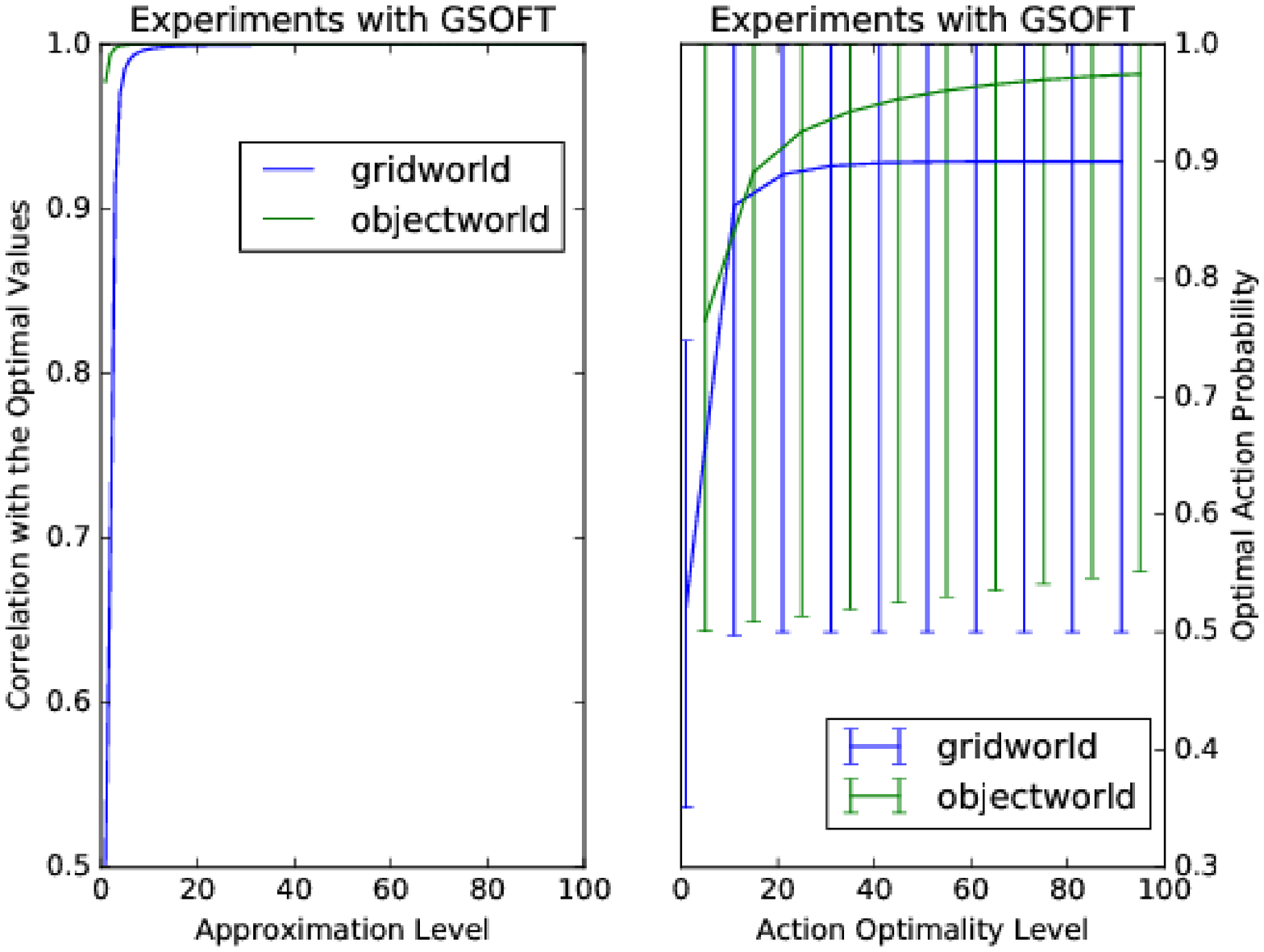}
  \caption{The effect of different approximation levels $k$ and confidence level $b$ on
  optimal value and optimal action selection with g-soft approximation: in both environments, when
  $k>20$, the approximated value function is nearly identical to the optimal value. For $b$, the
  probability to choose the optimal action keeps increasing in objectworld, but remains smaller than
  0.9 in grid world. }
  \label{fig:kbgsoftresult}
\end{figure}

\begin{figure}
  \centering
  \includegraphics[width=0.4\textwidth]{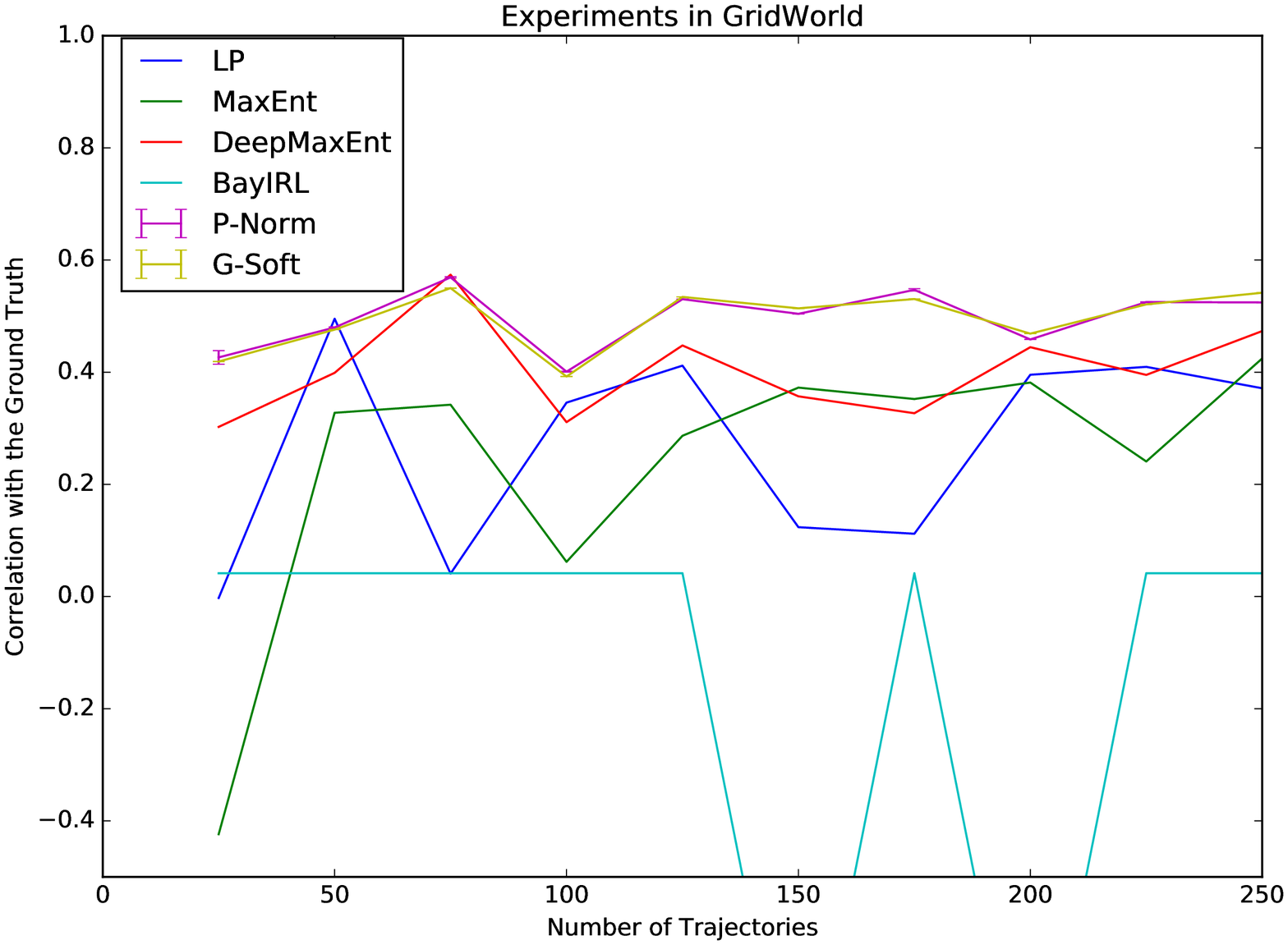}
  \caption{Comparison in gridworld: for each set of trajectories, we run the proposed method 100 
  times, each time with a random initial parameter, and compute the correlation coefficient between
  the learned reward and the true reward. The correlation coefficients for linear
  programming (LP), maximum entropy (MaxEnt), Bayesian IRL(BayIRL), and deep learning (DeepMaxEnt)
  are plotted. For the proposed method (PNORM approximation and GSOFT approximation), the
  correlation coefficient of the reward function with the highest log-likelihood and the standard
  deviation of the coefficients under random initial parameters are plotted.} 
  \label{fig:gridcompareresult}
\end{figure}

\begin{figure}
  \centering
  \includegraphics[width=0.4\textwidth]{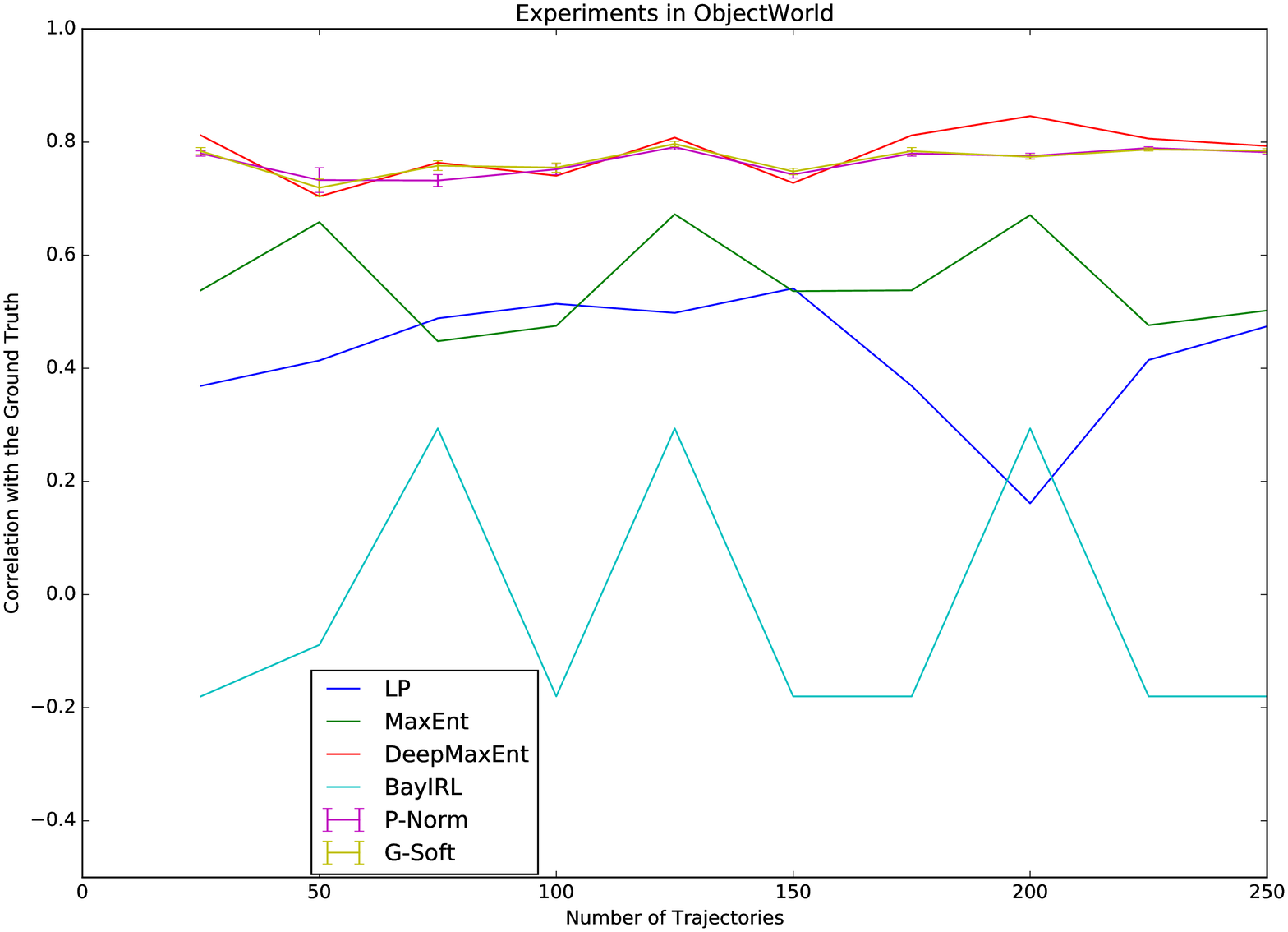}
  \caption{Comparison in objectworld: for each set of trajectories, we run the proposed method 100 
  times, each time with a random initial parameter, and compute the correlation coefficient between
  the learned reward and the true reward. The correlation coefficients for linear programming (LP),
  maximum entropy (MaxEnt), Bayesian IRL(BayIRL), and deep learning (DeepMaxEnt) are plotted. For
  the proposed method (PNORM approximation and GSOFT approximation), the correlation coefficient of
  the reward function with the highest log-likelihood and the standard deviation of the coefficients
  under random initial parameters are plotted.} 
 \label{fig:objectcompareresult}
\end{figure}

We evaluate the proposed method in three aspects: the accuracy of the value function approximation,
a comparison of the proposed method with existing methods, and the scalability of the proposed
method to large state space. We change the two environments to $5\times5$ grids to reduce the
computation time, but the dimension of the feature vector is still high enough to make the reward
function complex. The manually selected parameters for reward learning are the number of iterations
$e=1000$, the learning rate $\alpha=0.001$, and the discount factor $\gamma=0.9$. The parameters to
be evaluated include the approximation level $k$ and the confidence level $b$.

First, we run the approximate value iteration algorithm and the motion model in Equation
\eqref{equation:motionmodel} with different values of $k$ and $b$ in two environments. To evaluate
the approximation level $k$, we set the range of $k$ as 30 to 1000 for p-norm approximation, and 1
to 100 for g-soft approximation, and set $b=1$.  For each $k$, we compute the approximate value
function and evaluate it based on the correlation coefficient between the approximate value function
and the optimal value function. To evaluate the confidence level $b$, we choose the
range of $b$ as 1 to 100 for both approximations, and $k=1000$ for p-norm approximation, $k=100$ for
g-soft approximation. For each $b$, we compute the Q-function and the probability to take the
optimal action in each state. With only $5\times 5$ states, we compute the exact minimum, maximum,
and mean value of the probabilities in all states to take the optimal actions. The results are shown
in Figures \ref{fig:kbpnormresult} and \ref{fig:kbgsoftresult}.

The figures show that with sufficiently large $k$, the approximate value iteration generates almost
the identical result as the exact calculation. Therefore, to compute the gradient of the optimal
value with respect to a reward parameter, we can choose the largest $k$ that does not lead to data
overflow. However, the situation is different for $b$. Although the mean probability of optimal
actions increases with larger $b$, the mean value of the probabilities in all states to take the
optimal actions is always smaller than 0.9 in gridworld, because many state-action pairs have the
same Q-values, leading to multiple optimal policies, and the probability for each policy is always
smaller than 1.

Second, we compare the proposed method with existing methods, including the linear programming (LP)
approach in \cite{irl::irl1}, Bayesian method (BayIRL) in \cite{irl::bayirl}, the maximum entropy
(MaxEnt) approach in \cite{irl::maxentropy}, and a latest method based on deeep learning
(DeepMaxEnt) in \cite{irl::deepirl}. We randomly generate different numbers of trajectories, ranging
from 25 trajectories to 250 trajectories, and run the proposed method 100 times on the data, each
with a random initial parameter. The learned rewards are evaluated based on the correlation
coefficient with the true reward function. For existing methods, we compute the correlation
coefficient, and for the proposed method, we compute the correlation coefficient of the reward
function associated with the highest log-likelihood. To evaluate the multi-start strategy, we also
plot the standard deviation of the correlation coefficients. The parameter for p-norm approximation
is $k=100,b=1$ in both environments, and the parameter for g-soft approximation is $k=10,b=1$ in
both environments. Other parameters are shared among all methods. The comparison results are plotted
in Figures \ref{fig:gridcompareresult} and \ref{fig:objectcompareresult}.

The results show that in gridworld, where the ground truth is a linear reward function, the proposed
method performs better than existing methods, and only DeepMaxent outperforms the proposed method
occasionally. In objectworld, where the ground truth is a non-linear reward function, the proposed
method is second to DeepMaxent, because it adopts a non-linear neural network to model the reward
function while the proposed method uses a linear function. Besides, the theoretically
locally optimal results are quite similar to each other, because under a linear reward function and
a large approximate level $k$, the approximated Q values are approximately linear and the objective
function is close to a convex function.

Third, we test the scalability of the proposed method. We change the number of states in the
objectworld environment, and test the amount of time needed for one iteration of gradient ascent. To
record the accurate time, we do not adopt any parallel computing, and run the proposed method on a
single core of Intel CPU i7-6700. The implementation is a mix of C and python. The result is given
in Table \ref{table:scale}.

\begin{table}
  \caption{Computation time (second) for one iteration of gradient ascent.}
  \begin{tabular}{|c|c|c|c|c|c|c|}
    \hline
    state size&25&100&400&1600&6400&14400\\\hline
    pnorm&0.007&0.112&2.570&58.588&1560.014&8689.786\\\hline
    gsoft&0.004&0.093&2.088&54.136&1398.481&7035.641\\\hline
  \end{tabular}
  \label{table:scale}
\end{table}

The result shows that the algorithm can run a fair number of states, and in practice, the method can
be easily implemented as an efficient parallel algorithm by converting the Bellman Gradient
Iteration into matrix operations. Another bottleneck of the method is fitting the transition
model into the memory, whose size is $O(N_S^2*N_A)$, but in practice, we may divide it into
sub-matrices for efficiency

In summary, with a proper motion model to describe the actions, the proposed method performs better
than existing methods under linear reward functions while comparable to the state-of-the-art method
based on deep neural network. Besides, the proposed method is defined on state-action pairs, instead
of trajectories of fixed length, and this provides great flexibility in modeling practical actions.

Two minor drawbacks of the proposed methods are the locally optimal results and the
resource-intensive computation in large state space. But in practice, an approximately global
optimum can be achieved with a sufficiently high approximation level. The computation problem can be
solved with parallel computing on multi-core CPU or GPU. The major drawback of the proposed method
is the assumption of a known environment dynamics. The problem may be solved by sampling the motion
trajectories and estimating the dynamics.

\addtolength{\textheight}{-7.5cm}
%%%----------------------------------------------------------------------------------------------%%%
\section{Conclusions}
\label{irl::conclusions}
This work introduced two approximations of the Bellman Optimality Equation to model the relation
between action selection and reward function in a differentiable way, and proposed a Bellman
Gradient Iteration method to efficiently compute the gradient of Q-value with respect to reward
functions.  This method allows us to learn the reward with gradient methods and model different
behaviors by varying the approximation level. We test the proposed method in two simulated
environments, and reveal how different parameter settings affect the accuracy of reward learning. We
compare the proposed method with existing approaches, and show that the proposed method is more
accurate and flexible in learning reward functions from the observed actions 

In future work, we will extend the proposed framework in multiple directions. First, we will search
for other approximation methods that lead to a concave Q-function, thus a global optimum can be
found. Second, we will apply the proposed method to other scenarios with different motion models,
like online learning for human motion analysis and deep learning for nonlinear reward functions.

%%%%%%%%%%%%%%%%%%%%%%%%%%%%%%%%%%%%%%%%%%%%%%%%%%%%%%%%%%%%%%%%%%%%%%%%%%%%%%%%

%\bibliographystyle{IEEEtran}
%\bibliography{gradient}
\end{document}